\documentclass{ifacconf}

\usepackage{graphicx}
\usepackage{natbib}
\usepackage{amsmath,amssymb,bm}
\usepackage{booktabs}
\usepackage{multirow}
\usepackage{algorithm}
\usepackage{algpseudocode}

\newcounter{proposition}
\renewcommand{\theproposition}{\arabic{proposition}}

\newenvironment{proposition}[1][]
  {\refstepcounter{proposition}%
   \par\medskip
   \noindent\textbf{Proposition~\theproposition.%
   \if\relax\detokenize{#1}\relax\else\ (#1)\fi} \itshape}
  {\par\medskip}
  
\begin{document}
\begin{frontmatter}

\title{Fleet-Level Battery-Health-Aware Scheduling for Autonomous Mobile Robots}

\author[First]{Jiachen Li}
\author[First]{Shihao Li}
\author[First]{Jian Chu}
\author[First]{Wei Li}
\author[First]{Dongmei Chen}

\address[First]{University of Texas at Austin,
   Austin, TX 78712 USA (e-mail: jiachenli@utexas.edu, shihaoli@utexas.edu, jian\_chu@utexas.edu, weiwli@austin.utexas.edu, dmchen@me.utexas.edu).}

\begin{abstract}
Autonomous mobile robot fleets must coordinate task allocation and charging under limited shared resources, yet most battery-aware planning methods address only a single robot. This paper extends degradation-cost-aware task planning to a multi-robot setting by jointly optimizing task assignment, service sequencing, optional charging decisions, charging-mode selection, and charger access while balancing degradation across the fleet. The formulation relies on reduced-form degradation proxies grounded in the empirical battery-aging literature, capturing both charging-mode-dependent wear and idle-state-of-charge-dependent aging; the bilinear idle-aging term is linearized through a disaggregated piecewise McCormick formulation. Tight big-$M$ values derived from instance data strengthen the LP relaxation. To manage scalability, we propose a hierarchical matheuristic in which a fleet-level master problem coordinates assignments, routes, and charger usage, while robot-level subproblems—whose integer part decomposes into trivially small independent partition-selection problems—compute route-conditioned degradation schedules. Systematic experiments compare the proposed method against three baselines: a rule-based nearest-available dispatcher, an energy-aware formulation that enforces battery feasibility without modeling degradation, and a charger-unaware formulation that accounts for degradation but ignores shared-charger capacity limits.
\end{abstract}

\begin{keyword}
autonomous mobile robots, battery health, fleet scheduling, charging coordination, task allocation, mixed-integer linear programming
\end{keyword}

\end{frontmatter}

\section{Introduction}
Autonomous mobile robots (AMRs) are widely deployed in industry for transportation and material-handling tasks. In these environments, battery usage directly governs fleet uptime, maintenance costs, and infrastructure loading \citep{fragapane2021}. Most existing research on fleet-planning treats battery level merely as a minimum-feasibility constraint. Although energy-aware multi-robot assignment has been explored \citep{djenadi2022}, long-horizon battery degradation is seldom explicitly considered. Without such consideration, dispatching policies tend to hold robots at a high state of charge (SOC) and apply aggressive charging when the SOC is low. Both practices can accelerate long-term capacity fade.

Lithium-ion battery aging arises from both cycling-induced and calendar-driven mechanisms. Empirical studies show that degradation is strongly influenced by charging current, depth of discharge, temperature, and storage SOC \citep{doyle1993,fuller1994,choi2002,liaw2003,ramadass2003,rong2006,tredeau2009,lam2013,grolleau2014}. For AMR operations, two factors are especially important: fast charging intensifies cycling-related wear, and prolonged idling at elevated SOC accelerates calendar aging. An effective scheduler must therefore manage not only whether a robot has sufficient energy to complete its next task, but also how it charges and how long it idles at a given SOC.

Recent work has begun to address this battery-health-aware scheduling problem. \citet{chu2024acc} studied energy-efficient AMR trajectory planning under configuration constraints. \citet{li2025robust} formulated a single-AMR task-planning problem to minimize both charging- and idle-SOC-related degradations. Their results show that avoiding unnecessary high-SOC idling can significantly reduce battery degradation without sacrificing scheduling performance. However, at the modeling level, such a health-aware scheduling formulation introduces a bilinear structure through the product of SOC and idle-time. Fortunately, McCormick envelopes \citep{mccormick1976} and their piecewise refinements \citep{castro2015} provide a standard linearization approach to obtaining a tractable MILP. Other linearization techniques that could be employed include reformulation-linearization \citep{sherali1992}, piecewise-linear approximations \citep{misener2010}, and outer approximation \citep{bergamini2008}.

Previous battery-health work has focused on isolated robots. For real-world deployment, fleet performance needs to be optimized. Related to fleet scheduling, the MRTA literature has provided taxonomies for coordination problems \citep{korsah2013}, and electric vehicle routing research has optimized charging-station visits along routes \citep{schneider2014,keskin2016}. However, these studies  typically neglect battery degradation. For a fleet, health-aware scheduling becomes a coupled resource-allocation problem: task assignments shape energy trajectories, charging decisions compete for limited chargers, and the repeated dispatch of the same robots concentrates wear. Although energy-aware fleet allocation exists \citep{djenadi2022}, long-term degradation and charger contention remain weakly integrated.

This paper develops a fleet-level, battery-health-aware scheduling methodology  with shared charging resources, jointly determining task assignment, service order, charging decisions, and degradation distribution across the fleet. The fleet-level formulation introduces new coupling constraints through shared chargers and workload distribution that demand dedicated modeling and algorithmic treatment.

The main contributions are:
\begin{enumerate}
\item \emph{Fleet-level degradation-coupled formulation.}
      We formulate a fleet-level MILP that jointly captures task assignment, sequencing, charging-mode selection, and shared-charger coordination under explicit battery-degradation objectives.
      Cross-robot charger-ordering constraints and a degradation-balancing penalty couple per-robot substructures, while arc-specific big-$M$ values derived from instance data yield a substantially tighter LP relaxation than na\"{i}ve bounds.

\item \emph{Hierarchical matheuristic with decomposable subproblems.}
      We design a two-level solution method in which a fleet-level master problem resolves the combinatorial assignment and charger-ordering decisions, and robot-level subproblems optimize route-conditioned battery-health schedules.
      We prove that the subproblem's integer component decomposes into independent partition-selection problems of trivially small size (Propositions~\ref{prop:subproblem-size}--\ref{prop:subproblem-lp}), enabling efficient exact solution of each subproblem.

\item \emph{Comprehensive experimental evaluation.}
      We conduct systematic experiments across a range of fleet sizes, demonstrating that the proposed method reduces total degradation by up to 54\% over rule-based dispatch and achieves near-monolithic solution quality in over \(20{\times}\) lower computation time on the largest instances.
\end{enumerate}

The remainder of the paper is organized as follows. Section~2 presents the fleet-level formulation. Section~3 introduces the solution method. Section~4 describes the experimental results. Section~5 concludes.

\begin{table}[ht]
\small
\setlength{\tabcolsep}{3pt}
\begin{tabular}{@{}ll@{}}
\toprule
\multicolumn{2}{@{}c}{\textbf{Nomenclature}} \\
\midrule
\multicolumn{2}{@{}l}{\textit{Sets and indices}} \\
\midrule
$\mathcal{R}$              & Set of robots, indexed by $r$ \\
$\mathcal{K}$              & Set of tasks, indexed by $k$ \\
$\mathcal{M}$              & Set of chargers, indexed by $m$ \\
$\mathcal{L}_r$            & Admissible charging modes for robot $r$, indexed by $\ell$ \\
$\mathcal{V}^{-},\mathcal{V}^{+}$ & Source/sink augmented node sets \\
$\mathcal{A}$              & Admissible predecessor--successor arc set \\
$\mathcal{P}_S,\mathcal{P}_W$  & Partition index sets for SOC and waiting time \\
$\Delta$                   & Direct-transition index set \\
$\Gamma$                   & Charging-transition index set \\
$\Omega_m$                 & Potential charging sessions on charger $m$ \\
\midrule
\multicolumn{2}{@{}l}{\textit{Task and travel parameters}} \\
\midrule
$a_k,\; b_k$              & Release time, due time of task $k$ \\
$p_k$                      & Service duration of task $k$ \\
$e_k$                      & Energy consumption of task $k$ \\
$\tau^d_{ij},\; e^d_{ij}$ & Direct travel time and energy from $i$ to $j$ \\
$\tau^{\mathrm{to}}_{im},\; e^{\mathrm{to}}_{im}$ & Travel time/energy from node $i$ to charger $m$ \\
$\tau^{\mathrm{from}}_{mj},\; e^{\mathrm{from}}_{mj}$ & Travel time/energy from charger $m$ to node $j$ \\
\midrule
\multicolumn{2}{@{}l}{\textit{Battery and degradation parameters}} \\
\midrule
$S_r^0,\; S_r^{\min},\; S_r^{\max}$ & Initial, minimum, maximum SOC of robot $r$ \\
$c_{r\ell}$                & Charging rate of robot $r$ in mode $\ell$ \\
$\alpha_{r\ell}$           & Charging-aging coefficient (mode $\ell$) \\
$\beta_r$                  & Idle-aging coefficient \\
\midrule
\multicolumn{2}{@{}l}{\textit{Objective weights and big-$M$ constants}} \\
\midrule
$\lambda,\;\mu,\;\rho$    & Weights for charger waiting, tardiness, imbalance \\
$H$                        & Planning-horizon upper bound \\
$M_t,\; M_s$              & Big-$M$ constants for time and SOC constraints \\
\midrule
\multicolumn{2}{@{}l}{\textit{Decision variables}} \\
\midrule
$x_{rk}$                  & $=1$ if task $k$ is assigned to robot $r$ \\
$y_{rij}$                 & $=1$ if robot $r$ transitions directly from $i$ to $j$ \\
$g_{rijm\ell}$            & $=1$ if robot $r$ charges at $m$ in mode $\ell$ between $i,j$ \\
$T_k$                     & Start time of task $k$ \\
$\mathrm{tard}_k$         & Tardiness of task $k$ \\
$s^{\mathrm{in}}_{rk}$    & Arrival SOC of robot $r$ at task $k$ \\
$B_{rijm\ell}$            & Charging start time \\
$q_{rijm\ell}$            & Charger waiting time \\
$t^c_{rijm\ell}$          & Charging duration \\
$w_{rijm\ell}$            & Post-charge waiting time \\
$\bar{s}_{rijm\ell}$      & Post-charge SOC \\
$\ell_{rijm\ell}$         & Auxiliary for bilinear product $\bar{s}\cdot w$ \\
$z^{pq}_{rijm\ell}$       & $=1$ if partition $(p,q)$ is selected \\
$u_{\sigma,\sigma',m}$    & Charger-ordering binary for sessions $\sigma,\sigma'$ on $m$ \\
$A_r,\; A^{\max}$         & Total degradation of robot $r$; fleet maximum \\
\bottomrule
\end{tabular}
\end{table}

\section{Fleet-Level Problem Formulation}
We consider a fleet of battery-powered AMRs operating in a warehouse with transportation tasks and shared chargers. Each task is executed exactly once by one robot, each robot serves its assigned tasks sequentially, and between consecutive tasks a robot may travel directly or visit a charger. The scheduler jointly determines task assignment, service order, charging decisions, charger access, and battery-health operating points. Charging decisions are represented through a finite set of modes, yielding a MILP that captures both charging-mode-dependent and idle-SOC-dependent degradation. Figure~\ref{fig:structure} illustrates this structure.

\begin{figure}[t]
\centering
\includegraphics[width=\columnwidth]{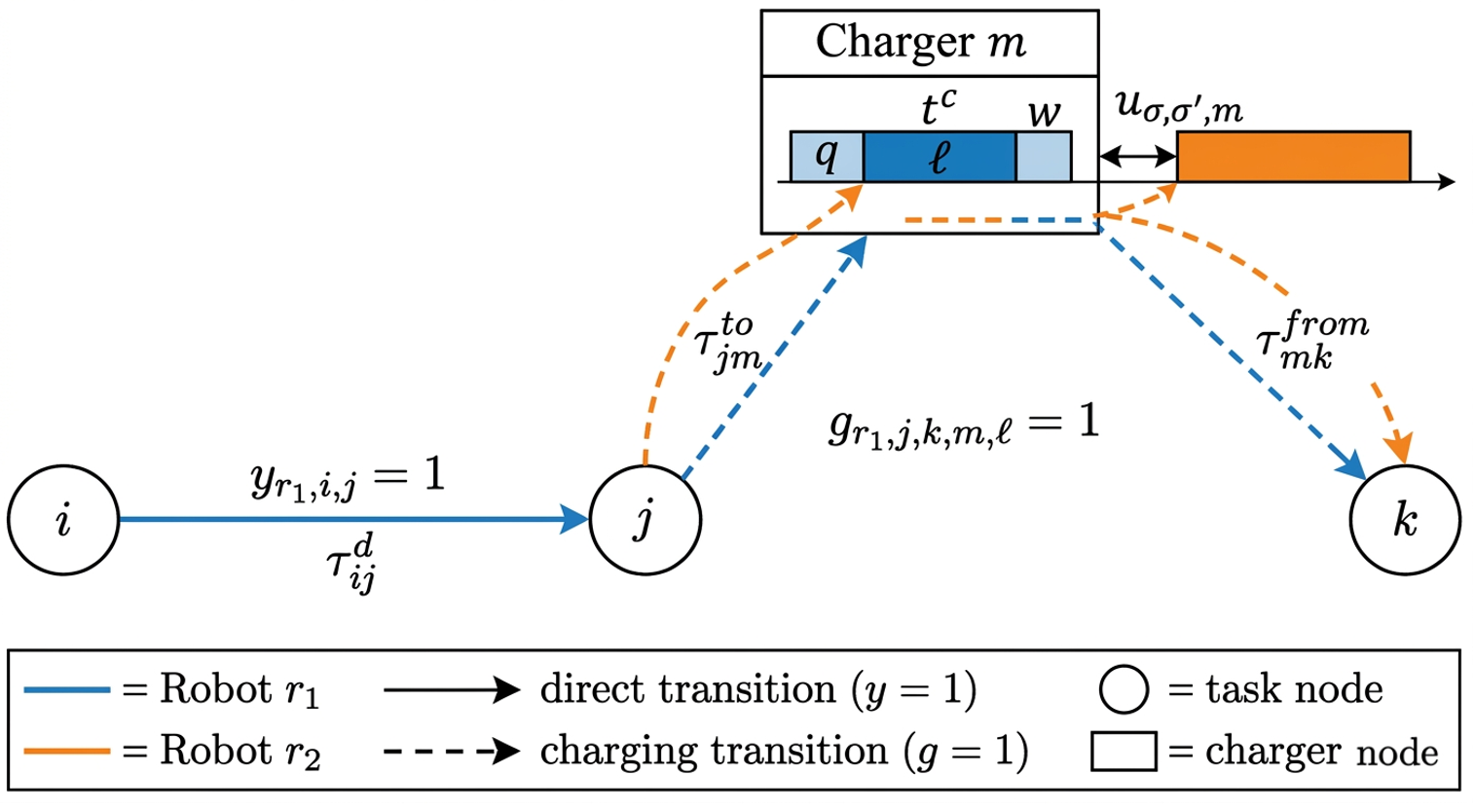}
\caption{Direct and charging transitions with shared-charger non-overlap coupling.}
\label{fig:structure}
\end{figure}

\subsection{Sets, parameters, and decision variables}
Let \(\mathcal{R}\), \(\mathcal{K}\), and \(\mathcal{M}\) denote the sets of robots, tasks, and chargers. Define source and sink dummy nodes \(0\) and \(n+1\), with \(\mathcal{V}^{-}:=\{0\}\cup\mathcal{K}\) and \(\mathcal{V}^{+}:=\mathcal{K}\cup\{n+1\}\). The admissible predecessor--successor pairs are \(\mathcal{A}:=\{(i,j): i\in\mathcal{V}^{-},\, j\in\mathcal{V}^{+},\, i\neq j\}\); charging transitions are defined only for successors \(j\in\mathcal{K}\).

For each robot \(r\in\mathcal{R}\), let \(\mathcal{L}_r\) denote its admissible charging modes, where mode \(\ell\in\mathcal{L}_r\) has charging rate \(c_{r\ell}>0\) and charging-aging coefficient \(\alpha_{r\ell}>0\). Let \(\beta_r>0\) be the idle-aging coefficient. Each task \(k\in\mathcal{K}\) has release time \(a_k\), due time \(b_k\), service duration \(p_k\), and energy consumption \(e_k\). For direct travel from \(i\) to \(j\), let \(\tau^{d}_{ij}\) and \(e^{d}_{ij}\) denote travel time and energy. For travel via charger \(m\), let \((\tau^{\mathrm{to}}_{im}, e^{\mathrm{to}}_{im})\) and \((\tau^{\mathrm{from}}_{mj}, e^{\mathrm{from}}_{mj})\) denote the corresponding times and energies.

Robot \(r\) has initial SOC \(S_r^0\), reserve \(S_r^{\min}\), and maximum \(S_r^{\max}\). Let \(H\) upper-bound all start times and durations, let \(M_t\) and \(M_s\) be big-\(M\) constants for time and SOC constraints, and let \(\lambda,\mu,\rho\ge 0\) weight charger waiting, tardiness, and degradation imbalance. For the piecewise linearization, \(\mathcal{P}_S\) and \(\mathcal{P}_W\) index partitions of the post-charge SOC and waiting-time ranges into intervals \([\underline S_p,\overline S_p]\) and \([\underline W_q,\overline W_q]\).

The direct-transition and charging-transition index sets are
\begin{align}
\Delta := \{(r,i,j):\;& r\in\mathcal{R},\; i\in\mathcal{V}^{-}, \nonumber\\
& j\in\mathcal{K},\; i\neq j\}, 
\label{eq:def-delta}\\
\Gamma := \{(r,i,j,m,\ell):\;& r\in\mathcal{R},\; i\in\mathcal{V}^{-}, \nonumber\\
& j\in\mathcal{K},\; i\neq j, \nonumber\\
& m\in\mathcal{M},\; \ell\in\mathcal{L}_r\}.
\label{eq:def-gamma}
\end{align}

The decision variables are as follows. Binary variables: \(x_{rk}\in\{0,1\}\) assigns task \(k\) to robot \(r\); \(y_{rij}\in\{0,1\}\) selects a direct transition from \(i\) to \(j\) by robot \(r\); \(g_{rijm\ell}\in\{0,1\}\) selects a charging transition via charger \(m\) in mode \(\ell\). Continuous task-level variables: \(T_k\ge 0\) is the start time, \(\mathrm{tard}_k\ge 0\) the tardiness, and \(s^{\mathrm{in}}_{rk}\ge 0\) the arrival SOC of robot \(r\) at task \(k\). Continuous charging-transition variables (indexed over \(\Gamma\)): \(B_{rijm\ell}\) is the charging start time, \(q_{rijm\ell}\) the charger waiting time, \(t^{c}_{rijm\ell}\) the charging duration, \(w_{rijm\ell}\) the post-charge waiting time, \(\bar s_{rijm\ell}\) the post-charge SOC, and \(\ell_{rijm\ell}\) an auxiliary for the product \(\bar s_{rijm\ell}w_{rijm\ell}\). Fleet-level variables: \(A_r\ge 0\) is the total degradation of robot \(r\) and \(A^{\max}\ge 0\) the fleet maximum.

By convention, \(T_0:=0\), \(p_0:=0\), \(e_0:=0\), and \(s^{\mathrm{in}}_{r0}:=S_r^0\) for all \(r\in\mathcal{R}\).

\subsection{Objective function}
The objective minimizes total degradation, charger waiting, tardiness, and degradation imbalance:
\begin{equation}
\label{eq:obj}
\min\;
\sum_{r\in\mathcal{R}} A_r
+\lambda \sum_{(r,i,j,m,\ell)\in\Gamma} q_{rijm\ell}
+\mu \sum_{k\in\mathcal{K}} \mathrm{tard}_k
+\rho A^{\max}.
\end{equation}
Robot-level degradation is
\begin{equation}
\label{eq:degradation-rigorous}
A_r=
\sum_{\substack{(r',i,j,m,\ell)\in\Gamma:\\ r'=r}}
\left(
\alpha_{r\ell} t^{c}_{rijm\ell}
+\beta_r \ell_{rijm\ell}
\right),
\end{equation}
and the imbalance term is enforced by
\begin{equation}
\label{eq:maxdeg-rigorous}
A_r \le A^{\max}, \qquad \forall r\in\mathcal{R}.
\end{equation}

\subsection{Assignment and sequencing constraints}
Each task is assigned to exactly one robot:
\begin{equation}
\label{eq:assign-rigorous}
\sum_{r\in\mathcal{R}} x_{rk}=1, \qquad \forall k\in\mathcal{K}.
\end{equation}
Incoming and outgoing flow conservation ensure one predecessor and one successor per assigned task:
\begin{equation}
\label{eq:pred-rigorous}
\sum_{\substack{i\in\mathcal{V}^{-}:\\ i\neq k}} y_{rik}
+
\sum_{\substack{(r',i,j,m,\ell)\in\Gamma:\\ r'=r,\; j=k}} g_{rikm\ell}
=
x_{rk},
\qquad \forall r\in\mathcal{R},\; k\in\mathcal{K},
\end{equation}
\begin{equation}
\label{eq:succ-rigorous}
\sum_{\substack{j\in\mathcal{V}^{+}:\\ j\neq k}} y_{rkj}
+
\sum_{\substack{(r',i,j,m,\ell)\in\Gamma:\\ r'=r,\; i=k}} g_{rkjm\ell}
=
x_{rk},
\qquad \forall r\in\mathcal{R},\; k\in\mathcal{K}.
\end{equation}
Each robot starts and ends at most one route:
\begin{align}
\label{eq:start-rigorous}
\sum_{j\in\mathcal{K}} y_{r0j}
+\sum_{\substack{(r',i,j,m,\ell)\in\Gamma:\\ r'=r,\; i=0}} g_{r0jm\ell}
&\le 1, && \forall r\in\mathcal{R},\\
\label{eq:end-rigorous}
\sum_{i\in\mathcal{K}} y_{ri,n+1}
&\le 1, && \forall r\in\mathcal{R},
\end{align}
with start/end consistency:
\begin{equation}
\label{eq:balance-rigorous}
\sum_{j\in\mathcal{K}} y_{r0j}
+\sum_{\substack{(r',i,j,m,\ell)\in\Gamma:\\ r'=r,\; i=0}} g_{r0jm\ell}
=
\sum_{i\in\mathcal{K}} y_{ri,n+1},
\qquad \forall r\in\mathcal{R}.
\end{equation}
Between consecutive tasks, exactly one transition type is selected: direct (\(y_{rij}=1\)) or charging (\(g_{rijm\ell}=1\)).

\subsection{Task timing constraints}
Task time windows require \(T_k \ge a_k\) and
\begin{align}
\label{eq:release-rigorous}
T_k &\ge a_k, && \forall k\in\mathcal{K},\\
\label{eq:due-rigorous}
T_k &\le b_k+\mathrm{tard}_k, && \forall k\in\mathcal{K}.
\end{align}
Direct-transition precedence:
\begin{equation}
\label{eq:direct-time}
T_j \ge T_i + p_i + \tau^{d}_{ij} - M_t(1-y_{rij}),
\qquad \forall (r,i,j)\in\Delta.
\end{equation}
Charging-transition timing: charging cannot start before the robot reaches the charger, and task \(j\) cannot start before charging completes:
\begin{equation}
\label{eq:charge-start-rigorous}
\begin{aligned}
B_{rijm\ell}
&\ge
T_i + p_i + \tau^{\mathrm{to}}_{im}
- M_t(1-g_{rijm\ell}),\\
&\qquad \forall (r,i,j,m,\ell)\in\Gamma.
\end{aligned}
\end{equation}

\begin{equation}
\label{eq:queue-rigorous}
\begin{aligned}
q_{rijm\ell}
&\ge
B_{rijm\ell}
-\left(T_i + p_i + \tau^{\mathrm{to}}_{im}\right) \\
&\qquad - M_t(1-g_{rijm\ell}),\\
&\qquad \forall (r,i,j,m,\ell)\in\Gamma.
\end{aligned}
\end{equation}

\begin{align}
\label{eq:startlb-rigorous}
T_j
&\ge
B_{rijm\ell}+t^{c}_{rijm\ell}
+\tau^{\mathrm{from}}_{mj}+w_{rijm\ell} \nonumber\\
&\qquad - M_t(1-g_{rijm\ell}),
\qquad \forall (r,i,j,m,\ell)\in\Gamma,\\
\label{eq:startub-rigorous}
T_j
&\le
B_{rijm\ell}+t^{c}_{rijm\ell}
+\tau^{\mathrm{from}}_{mj}+w_{rijm\ell} \nonumber\\
&\qquad + M_t(1-g_{rijm\ell}),
\qquad \forall (r,i,j,m,\ell)\in\Gamma.
\end{align}
Charging variables are activated only when selected. For all \((r,i,j,m,\ell)\in\Gamma\),
\begin{align}
\label{eq:charge-var-bounds-B}
0 \le B_{rijm\ell} &\le H\, g_{rijm\ell},\\
\label{eq:charge-var-bounds-q}
0 \le q_{rijm\ell} &\le H\, g_{rijm\ell},\\
\label{eq:charge-var-bounds-t}
0 \le t^{c}_{rijm\ell} &\le H\, g_{rijm\ell},\\
\label{eq:charge-var-bounds-w}
0 \le w_{rijm\ell} &\le H\, g_{rijm\ell}.
\end{align}

\subsection{SOC feasibility and charging dynamics}
Arrival SOC is bounded by assignment and must remain above reserve:
\begin{equation}
\label{eq:soc-bounds-task}
0 \le s^{\mathrm{in}}_{rk} \le S_r^{\max} x_{rk},
\qquad \forall r\in\mathcal{R},\; k\in\mathcal{K}.
\end{equation}
\begin{equation}
\label{eq:reserve-rigorous}
s^{\mathrm{in}}_{rk} - e_k \ge S_r^{\min} - M_s(1-x_{rk}),
\qquad \forall r\in\mathcal{R},\; k\in\mathcal{K}.
\end{equation}
For direct transitions, arrival SOC at \(j\) equals the residual after task \(i\) and travel:
\begin{align}
\label{eq:direct-soc-lb}
s^{\mathrm{in}}_{rj}
&\ge
s^{\mathrm{in}}_{ri} - e_i - e^{d}_{ij}
- M_s(1-y_{rij}),
&& \forall (r,i,j)\in\Delta,\\
\label{eq:direct-soc-ub}
s^{\mathrm{in}}_{rj}
&\le
s^{\mathrm{in}}_{ri} - e_i - e^{d}_{ij}
+ M_s(1-y_{rij}),
&& \forall (r,i,j)\in\Delta.
\end{align}
For charging transitions, the robot must reach the charger above reserve:
\begin{equation}
\label{eq:reach-charger}
s^{\mathrm{in}}_{ri} - e_i - e^{\mathrm{to}}_{im}
\ge
S_r^{\min} - M_s(1-g_{rijm\ell}),
\qquad \forall (r,i,j,m,\ell)\in\Gamma.
\end{equation}
Post-charge SOC is bounded and linked to charging duration:
\begin{align}
\label{eq:postcharge-bound}
0 \le \bar s_{rijm\ell} &\le S_r^{\max} g_{rijm\ell},\\
\label{eq:postcharge-soc-lb}
\bar s_{rijm\ell}
&\ge
s^{\mathrm{in}}_{ri} - e_i - e^{\mathrm{to}}_{im}
+ c_{r\ell} t^{c}_{rijm\ell}
- M_s(1-g_{rijm\ell}),\\
\label{eq:postcharge-soc-ub}
\bar s_{rijm\ell}
&\le
s^{\mathrm{in}}_{ri} - e_i - e^{\mathrm{to}}_{im}
+ c_{r\ell} t^{c}_{rijm\ell}
+ M_s(1-g_{rijm\ell}).
\end{align}
Arrival SOC at the next task subtracts travel energy from charger to task:
\begin{align}
\label{eq:arrival-soc-lb}
s^{\mathrm{in}}_{rj}
&\ge
\bar s_{rijm\ell} - e^{\mathrm{from}}_{mj}
- M_s(1-g_{rijm\ell}),\\
\label{eq:arrival-soc-ub}
s^{\mathrm{in}}_{rj}
&\le
\bar s_{rijm\ell} - e^{\mathrm{from}}_{mj}
+ M_s(1-g_{rijm\ell}).
\end{align}
Nonnegative service and travel times ensure the time-indexed constraints eliminate subtours.

\subsection{Shared-charger capacity constraints}
For charger \(m\in\mathcal{M}\), define the potential sessions \(\Omega_m := \{(r,i,j,\ell): r\in\mathcal{R},\, i\in\mathcal{V}^{-},\, j\in\mathcal{K},\, i\neq j,\, \ell\in\mathcal{L}_r\}\). For each unordered pair of distinct sessions \(\sigma,\sigma'\in\Omega_m\), let \(u_{\sigma,\sigma',m}\in\{0,1\}\) enforce temporal ordering. Non-overlap is imposed by
\begin{align}
\label{eq:nolap1-rigorous}
B_{rijm\ell}+t^{c}_{rijm\ell}
&\le
B_{r'i'j'm\ell'}
+ M_t\bigl(1-u_{\sigma,\sigma',m}\bigr) \notag\\
&\quad
+ M_t\bigl(2-g_{rijm\ell}-g_{r'i'j'm\ell'}\bigr),\\
\label{eq:nolap2-rigorous}
B_{r'i'j'm\ell'}+t^{c}_{r'i'j'm\ell'}
&\le
B_{rijm\ell}
+ M_t u_{\sigma,\sigma',m} \notag\\
&\quad
+ M_t\bigl(2-g_{rijm\ell}-g_{r'i'j'm\ell'}\bigr),
\end{align}
for all \(m\in\mathcal{M}\) and all distinct pairs \(\sigma,\sigma'\in\Omega_m\).

\subsection{Piecewise McCormick linearization of idle aging}
Idle aging depends on the bilinear product \(\ell_{rijm\ell} = \bar s_{rijm\ell} w_{rijm\ell}\) for each \((r,i,j,m,\ell)\in\Gamma\):
\begin{equation}
\label{eq:bilinear-rigorous}
\ell_{rijm\ell} = \bar s_{rijm\ell} w_{rijm\ell}.
\end{equation}
We linearize this with a disaggregated piecewise McCormick formulation. For each \((p,q)\in\mathcal{P}_S\times\mathcal{P}_W\), introduce binaries \(z^{pq}_{rijm\ell}\in\{0,1\}\) and disaggregated variables \(\bar s^{pq}_{rijm\ell}\), \(w^{pq}_{rijm\ell}\), \(\ell^{pq}_{rijm\ell}\ge 0\). Partition selection and aggregation:
\begin{equation}
\label{eq:partition-select}
\sum_{p\in\mathcal{P}_S}\sum_{q\in\mathcal{P}_W} z^{pq}_{rijm\ell}
=
g_{rijm\ell},
\qquad \forall (r,i,j,m,\ell)\in\Gamma.
\end{equation}
\begin{align}
\label{eq:partition-agg-s}
\bar s_{rijm\ell}
&=
\sum_{p\in\mathcal{P}_S}\sum_{q\in\mathcal{P}_W}
\bar s^{pq}_{rijm\ell},\\
\label{eq:partition-agg-w}
w_{rijm\ell}
&=
\sum_{p\in\mathcal{P}_S}\sum_{q\in\mathcal{P}_W}
w^{pq}_{rijm\ell},\\
\label{eq:partition-agg-l}
\ell_{rijm\ell}
&=
\sum_{p\in\mathcal{P}_S}\sum_{q\in\mathcal{P}_W}
\ell^{pq}_{rijm\ell}.
\end{align}
Local variables are restricted to the selected box. For all \((r,i,j,m,\ell)\in\Gamma\), \(p\in\mathcal{P}_S\), \(q\in\mathcal{P}_W\):
\begin{align}
\label{eq:partition-bounds-s}
\underline S_p\, z^{pq}_{rijm\ell}
\le
\bar s^{pq}_{rijm\ell}
\le
\overline S_p\, z^{pq}_{rijm\ell},\\
\label{eq:partition-bounds-w}
\underline W_q\, z^{pq}_{rijm\ell}
\le
w^{pq}_{rijm\ell}
\le
\overline W_q\, z^{pq}_{rijm\ell}.
\end{align}
The McCormick envelope for each partition:
\begin{align}
\label{eq:mcc1-rigorous}
\ell^{pq}_{rijm\ell}
&\ge
\underline S_p\, w^{pq}_{rijm\ell}
+
\underline W_q\, \bar s^{pq}_{rijm\ell}
-
\underline S_p \underline W_q\, z^{pq}_{rijm\ell},\\
\label{eq:mcc2-rigorous}
\ell^{pq}_{rijm\ell}
&\ge
\overline S_p\, w^{pq}_{rijm\ell}
+
\overline W_q\, \bar s^{pq}_{rijm\ell}
-
\overline S_p \overline W_q\, z^{pq}_{rijm\ell},\\
\label{eq:mcc3-rigorous}
\ell^{pq}_{rijm\ell}
&\le
\overline S_p\, w^{pq}_{rijm\ell}
+
\underline W_q\, \bar s^{pq}_{rijm\ell}
-
\overline S_p \underline W_q\, z^{pq}_{rijm\ell},\\
\label{eq:mcc4-rigorous}
\ell^{pq}_{rijm\ell}
&\le
\underline S_p\, w^{pq}_{rijm\ell}
+
\overline W_q\, \bar s^{pq}_{rijm\ell}
-
\underline S_p \overline W_q\, z^{pq}_{rijm\ell}.
\end{align}
The complete model \eqref{eq:obj}--\eqref{eq:mcc4-rigorous} is a fleet-level MILP jointly optimizing assignment, sequencing, charging, charger contention, and degradation.

\subsection{Model size and complexity}
\label{sec:model-size}

The MILP inherits NP-hardness from the heterogeneous vehicle routing problem with time windows \citep{lenstra1981}. With \(R=|\mathcal{R}|\), \(K=|\mathcal{K}|\), \(M=|\mathcal{M}|\), \(L=\max_r|\mathcal{L}_r|\), \(P_S=|\mathcal{P}_S|\), \(P_W=|\mathcal{P}_W|\): routing binaries number \(O(RK^2)\), charging-transition binaries \(O(RK^2ML)\), charger-ordering variables \(O(M(RK^2L)^2)\), and McCormick partition binaries \(O(RK^2MLP_SP_W)\). This rapid growth motivates the hierarchical decomposition of Section~3.

\subsection{Tight big-$M$ values}
\label{sec:tightM}

Loose big-\(M\) constants weaken the LP relaxation and impede branch-and-bound convergence. For the direct-transition constraint \eqref{eq:direct-time}, the tightest valid constant is
\[
M_t^{ij} := \max\{0,\; b_j + \bar\Delta_j - a_i - p_i - \tau^d_{ij}\} + \bar\Delta_j,
\]
where \(\bar\Delta_j := \max(0, b_j - a_j)\). When \(a_i + p_i + \tau^d_{ij} > b_j + \bar\Delta_j\), the arc is eliminated entirely. Analogous constants are derived for \eqref{eq:charge-start-rigorous}--\eqref{eq:startub-rigorous} and \eqref{eq:nolap1-rigorous}--\eqref{eq:nolap2-rigorous}, using \(H\) where instance-specific bounds are unavailable.

For SOC propagation \eqref{eq:direct-soc-lb}--\eqref{eq:direct-soc-ub}:
\[
M_s := S_r^{\max} - S_r^{\min} + \max_{k\in\mathcal{K}} e_k + \max_{(i,j)\in\mathcal{A}} e^d_{ij}.
\]
For charging constraints \eqref{eq:postcharge-soc-lb}--\eqref{eq:postcharge-soc-ub}: \(M_s^c := S_r^{\max}\). Arc-specific big-\(M\) values typically tighten the LP gap by one to two orders of magnitude \citep{hooker2003}.

\section{Solution Method}
Small instances of the fleet-level MILP can be solved exactly with a commercial solver. For larger instances, the binary variable count grows rapidly, so we propose a hierarchical \emph{matheuristic} separating fleet-level coordination from route-conditioned battery-health scheduling. The method produces high-quality feasible solutions but does not claim global optimality.

\subsection{Overview}
At each iteration \(\nu\), the algorithm solves:
\begin{enumerate}
\item A fleet-level master problem determining the binary
      coordination variables.
\item Robot-level subproblems computing exact
      route-conditioned schedules and degradation values.
\end{enumerate}
The master retains the fleet-coupling decisions (assignment, sequencing, charging-station and mode choice, charger ordering). The subproblems optimize continuous timing, SOC, and piecewise McCormick variables along master-selected routes. Feasible subproblems return exact degradation values; infeasible ones generate combinatorial cuts excluding the current pattern.

Figure~\ref{fig:workflow} summarizes the overall procedure; 
the following subsections detail each component.

\begin{figure*}[t]
\centering
\includegraphics[width=1.5\columnwidth]{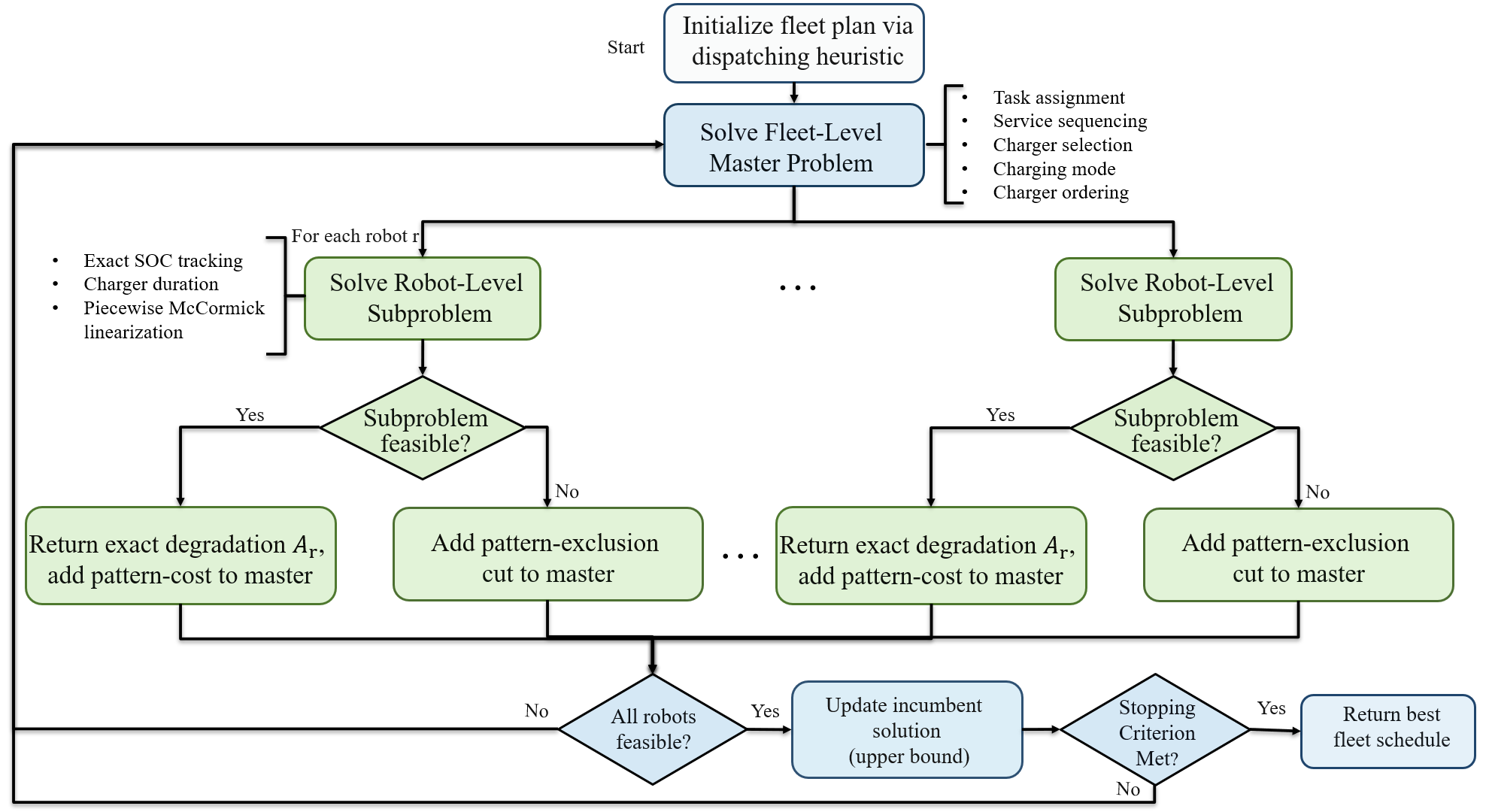}
\caption{Hierarchical matheuristic workflow.}
\label{fig:workflow}
\end{figure*}

\subsection{Fleet-level master problem}
Let \(\theta_r\) estimate robot~\(r\)'s degradation and \(\Theta^{\max}\) the fleet maximum. The master uses binaries \(x_{rk}\), \(y_{rij}\), \(g_{rijm\ell}\), charger-ordering variables \(u_{\sigma,\sigma',m}\), and coarse timing variables \(\hat T_k\), \(\hat B_{rijm\ell}\), \(\hat q_{rijm\ell}\), \(\hat t^c_{rijm\ell}\), \(\hat w_{rijm\ell}\). Its objective is
\begin{equation}
\label{eq:master-obj}
\min \;
\sum_{r\in\mathcal{R}} \theta_r
+\lambda \sum_{(r,i,j,m,\ell)\in\Gamma} \hat q_{rijm\ell}
+\mu \sum_{k\in\mathcal{K}} \hat{\mathrm{tard}}_k
+\rho \Theta^{\max},
\end{equation}
subject to
\begin{equation}
\label{eq:master-max}
\theta_r \le \Theta^{\max}, \qquad \forall r\in\mathcal{R},
\end{equation}
together with the assignment and sequencing constraints \eqref{eq:assign-rigorous}--\eqref{eq:balance-rigorous}, timing constraints \eqref{eq:release-rigorous}--\eqref{eq:startub-rigorous}, activation bounds \eqref{eq:charge-var-bounds-B}--\eqref{eq:charge-var-bounds-w}, and charger non-overlap constraints \eqref{eq:nolap1-rigorous}--\eqref{eq:nolap2-rigorous}.

The exact SOC recursion \eqref{eq:soc-bounds-task}--\eqref{eq:arrival-soc-ub} and piecewise McCormick linearization \eqref{eq:partition-select}--\eqref{eq:mcc4-rigorous} are omitted; \(\theta_r\) is progressively refined via subproblem feedback.

To reduce battery-infeasible patterns, a surrogate energy constraint is included. With \(E_r^{\max}:= S_r^0 - S_r^{\min}\),
\begin{equation}
\label{eq:surrogate-energy}
\sum_{k'\in\mathcal{K}} e_{k'} x_{rk'} + \sum_{\substack{(r',i,j)\in\Delta:\\ r'=r}} e^{d}_{ij} y_{rij}
\le
E_r^{\max} + \sum_{\substack{(r',i,j,m,\ell)\in\Gamma:\\ r'=r}} c_{r\ell} \hat t^c_{rijm\ell}.
\end{equation}
This eliminates globally energy-infeasible assignments without enforcing the exact SOC trajectory. A minimum charging-session duration prevents zero-length charger reservations:
\begin{equation}
\label{eq:master-charge-lb}
\hat t^c_{rijm\ell} \ge \underline t_{r\ell}\, g_{rijm\ell},
\qquad \forall (r,i,j,m,\ell)\in\Gamma.
\end{equation}

\subsection{Route-conditioned robot subproblems}
Fix iteration~\(\nu\) with master solution \(x^{(\nu)}, y^{(\nu)}, g^{(\nu)}, u^{(\nu)}\). For each robot~\(r\), define the active direct-transition set \(\mathcal{D}_r^{(\nu)} := \{(i,j)\in\mathcal{A} : y_{rij}^{(\nu)}=1\}\), the active charging-transition set
\begin{equation}
\mathcal{C}_r^{(\nu)}
:=
\left\{
\begin{aligned}
(i,j,m,\ell):\;& i\in\mathcal{V}^{-},\; j\in\mathcal{K},\\
& m\in\mathcal{M},\; \ell\in\mathcal{L}_r,\\
& g_{rijm\ell}^{(\nu)}=1
\end{aligned}
\right\}.
\end{equation}
and the assigned task set \(\mathcal{K}_r^{(\nu)}:=\{k\in\mathcal{K}: x_{rk}^{(\nu)}=1\}\).

The subproblem fixes the master's discrete pattern and optimizes continuous variables. Its objective is
\begin{equation}
\label{eq:subproblem-obj-rigorous}
\min \;
\sum_{(i,j,m,\ell)\in\mathcal{C}_r^{(\nu)}}
\left(
\alpha_{r\ell} t^c_{rijm\ell}
+\beta_r \ell_{rijm\ell}
\right)
+\mu \sum_{k\in\mathcal{K}_r^{(\nu)}} \mathrm{tard}_k.
\end{equation}
It enforces the exact timing, SOC feasibility, and piecewise McCormick constraints from Section~2, restricted to active transitions. Charger-order decisions from \(u^{(\nu)}\) enter as fixed precedence relations.

The output is either an exact feasible schedule with degradation \(A_r^{(\nu)}\), or a proof of infeasibility under the full SOC and aging model.

\begin{proposition}
\label{prop:subproblem-size}
For a fixed route-and-charging pattern with \(n_r := |\mathcal{K}_r^{(\nu)}|\) assigned tasks and \(c_r := |\mathcal{C}_r^{(\nu)}|\) charging transitions, the robot-level subproblem has \(O(c_r P_S P_W)\) binary variables (from partition selection) and \(O(n_r + c_r P_S P_W)\) continuous variables and constraints.
\end{proposition}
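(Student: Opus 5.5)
The plan is a direct counting argument over the subproblem's variables and constraints once the master pattern $x^{(\nu)},y^{(\nu)},g^{(\nu)},u^{(\nu)}$ is fixed. The first step is to check that every binary of the full MILP except the partition selectors $z^{pq}$ becomes a constant. For each inactive charging transition, the partition-selection equality \eqref{eq:partition-select} has right-hand side $g^{(\nu)}_{rijm\ell}=0$. Together with the box constraints \eqref{eq:partition-bounds-s}--\eqref{eq:partition-bounds-w}, this forces all $z^{pq}$, $\bar s^{pq}$, $w^{pq}$ and $\ell^{pq}$ for that transition to zero. The activation bounds \eqref{eq:charge-var-bounds-B}--\eqref{eq:charge-var-bounds-w} and \eqref{eq:postcharge-bound} do the same for $B,q,t^c,w,\bar s$. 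Those variables can therefore be eliminated, and only the $c_r$ active tuples in $\mathcal{C}_r^{(\nu)}$ remain. Each active tuple carries exactly $P_SP_W$ binaries $z^{pq}$, which gives $O(c_rP_SP_W)$ binaries.

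The second step counts continuous variables in three groups.
\begin{itemize}
\item \textbf{Task-level:} $T_k$, $\mathrm{tard}_k$ and $s^{\mathrm{in}}_{rk}$ for $k\in\mathcal{K}_r^{(\nu)}$, which is $O(n_r)$. The variables $s^{\mathrm{in}}_{rk}$ for unassigned $k$ vanish by \eqref{eq:soc-bounds-task}.
\item \textbf{Per active charging tuple:} the six variables $B,q,t^c,w,\bar s,\ell$, which is $O(c_r)$.
\item \textbf{Disaggregated:} $\bar s^{pq},w^{pq},\ell^{pq}$, which is $3c_rP_SP_W$.
\end{itemize}
The total is $O(n_r+c_rP_SP_W)$.

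The third step counts constraints by substituting the fixed binaries. Every big-$M$ constraint whose indicator is $0$ becomes redundant: with the tight constants of Section~\ref{sec:tightM} it is implied by the variable bounds, so it can be dropped.

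\begin{itemize}
\item \textbf{Direct transitions.} The active arcs form a single path through the $n_r$ assigned tasks from node $0$ to node $n+1$, by \eqref{eq:pred-rigorous}--\eqref{eq:balance-rigorous}. Hence $|\mathcal{D}_r^{(\nu)}|+c_r=n_r+1$, and the direct timing and SOC constraints \eqref{eq:direct-time}, \eqref{eq:direct-soc-lb}--\eqref{eq:direct-soc-ub} contribute $O(n_r)$.
\item \textbf{Task-level constraints.} Release, due and reserve constraints \eqref{eq:release-rigorous}, \eqref{eq:due-rigorous}, \eqref{eq:reserve-rigorous} also contribute $O(n_r)$.
\item \textbf{Charging constraints.} The constraints \eqref{eq:charge-start-rigorous}--\eqref{eq:startub-rigorous} and \eqref{eq:reach-charger}--\eqref{eq:arrival-soc-ub}, together with the aggregation equalities, contribute $O(1)$ per active tuple.
\item \textbf{McCormick constraints.} The bounds and envelopes \eqref{eq:partition-bounds-s}--\eqref{eq:mcc4-rigorous} contribute $O(1)$ per $(p,q)$ per active tuple.
\end{itemize}

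The main obstacle is the charger-ordering constraints \eqref{eq:nolap1-rigorous}--\eqref{eq:nolap2-rigorous}. They range over pairs of sessions, and in the fleet model there are $O(M(RK^2L)^2)$ of them. Two points need arguing. First, a pair with an inactive session is relaxed by the $M_t(2-g-g')$ term. Second, a pair of active sessions becomes a fixed precedence relation, because $u^{(\nu)}$ is fixed.

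The difficulty is sessions of other robots: they couple the subproblems and could contribute more than $O(c_r)$ constraints. To handle this, the plan treats other robots' sessions as fixed time windows supplied by the master, which is how ``fixed precedence relations'' is read in the subproblem description. Each active session of robot $r$ is then bounded only against its immediate predecessor and successor in the fixed order on its charger. The remaining orderings follow by transitivity of the fixed order, so each of robot $r$'s active sessions needs only $O(1)$ precedence constraints, which is $O(c_r)$ in total. Summing all contributions gives $O(n_r+c_rP_SP_W)$ constraints.
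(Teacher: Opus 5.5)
Your proposal is correct and uses the same direct counting argument as the paper: once $x^{(\nu)},y^{(\nu)},g^{(\nu)},u^{(\nu)}$ are fixed, the only binaries left are the $P_SP_W$ selectors $z^{pq}$ per active charging transition, each task carries $O(1)$ timing and SOC variables, and each active transition carries $O(P_SP_W)$ disaggregated variables and McCormick constraints. Your version is more thorough than the paper's, which only notes that $u^{(\nu)}$ is fixed and never counts the surviving charger non-overlap constraints; your fixed-window reading, with transitivity reducing each session to its immediate neighbours on the charger, is what makes the $O(c_r)$ bound on those constraints explicit.
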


\noindent\emph{Proof.}
Once the routing pattern is fixed, the subproblem retains only the partition-selection binaries \(z^{pq}_{rijm\ell}\) for each active charging transition and each \((p,q)\in\mathcal{P}_S\times\mathcal{P}_W\), yielding \(c_r P_S P_W\) binary variables. Each task contributes \(O(1)\) timing and SOC variables; each charging transition contributes \(O(P_S P_W)\) disaggregated variables and McCormick constraints. All routing binaries \(y_{rij}\), \(g_{rijm\ell}\), and charger-ordering variables \(u_{\sigma,\sigma',m}\) are fixed. \hfill$\square$

\begin{proposition}
\label{prop:subproblem-lp}
The partition-selection variables for each active charging transition are structurally independent: the \(z^{pq}\) variables for one transition do not appear in any constraint of another. The subproblem therefore decomposes across charging transitions with respect to the partition binaries.
\end{proposition}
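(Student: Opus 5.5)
I will establish the statement by inspecting the constraint list of the robot-level subproblem and tracking where each $z^{pq}_{rijm\ell}$ appears. The subproblem, by its definition after \eqref{eq:subproblem-obj-rigorous}, consists of the objective, the timing constraints \eqref{eq:release-rigorous}--\eqref{eq:startub-rigorous}, the SOC constraints \eqref{eq:soc-bounds-task}--\eqref{eq:arrival-soc-ub}, fixed charger precedences from $u^{(\nu)}$, and the piecewise McCormick block \eqref{eq:partition-select}--\eqref{eq:mcc4-rigorous}, all restricted to active transitions. As in Proposition~\ref{prop:subproblem-size}, the only binaries that remain are the $z^{pq}_{rijm\ell}$ for $(i,j,m,\ell)\in\mathcal{C}_r^{(\nu)}$.

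The first step is a support argument. The partition binaries do not appear in the objective \eqref{eq:subproblem-obj-rigorous}, the timing constraints, the SOC constraints, or the charger-precedence constraints. They appear only in \eqref{eq:partition-select}, \eqref{eq:partition-bounds-s}, \eqref{eq:partition-bounds-w}, and \eqref{eq:mcc1-rigorous}--\eqref{eq:mcc4-rigorous}. Each of these constraints is indexed by a single tuple $(r,i,j,m,\ell)$, possibly together with $(p,q)$. It involves only $z^{pq}$, $\bar s^{pq}$, $w^{pq}$, $\ell^{pq}$ for that same tuple, plus the aggregate variables $\bar s,w,\ell,g$ of that same tuple. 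No constraint mixes indices from two different transitions. Hence the $z$-variables of distinct transitions never share a constraint, which gives the first claim.

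The second step draws out the decomposition. I partition the variables into a linking block $\xi$, consisting of the task-level $T_k,\mathrm{tard}_k,s^{\mathrm{in}}_{rk}$ and the per-transition aggregates $B,q,t^c,w,\bar s,\ell$, and per-transition local blocks $\zeta_\gamma=(z^{pq},\bar s^{pq},w^{pq},\ell^{pq})_{p,q}$. The feasible set then has the form $\{(\xi,\zeta): \xi\in X,\ (\bar s_\gamma,w_\gamma,\ell_\gamma,\zeta_\gamma)\in Z_\gamma\ \forall\gamma\}$, where $g_\gamma=1$ is fixed. This is a block-angular structure in which the binaries couple to the rest only through three aggregate scalars per transition. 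So, for fixed linking variables, the integer part splits into $c_r$ independent problems. Each of these selects one of $P_SP_W$ boxes, since \eqref{eq:partition-select} with $g=1$ forces exactly one $z^{pq}=1$. This yields the claimed decomposition "with respect to the partition binaries".

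The main obstacle is stating the decomposition precisely rather than overclaiming. The subproblem as a whole does not fully separate, because $\bar s$ and $w$ feed into the shared timing and SOC chain. I will therefore phrase the result as separability of the integer component given the aggregates, or equivalently as follows: the branch-and-bound over $z$ is a product of $c_r$ independent choices, each among $P_SP_W$ alternatives. One can also enumerate the box per transition and solve an LP. I will also check that fixing $g=1$ leaves no cross-transition coupling through the activation bounds \eqref{eq:charge-var-bounds-B}--\eqref{eq:charge-var-bounds-w}, and that no constraint other than \eqref{eq:partition-select}--\eqref{eq:mcc4-rigorous} contains $z$.
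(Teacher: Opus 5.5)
Your proposal is correct and uses the same support argument as the paper. $z^{pq}_{rijm\ell}$ occurs only in \eqref{eq:partition-select}, \eqref{eq:partition-bounds-s}--\eqref{eq:partition-bounds-w} and \eqref{eq:mcc1-rigorous}--\eqref{eq:mcc4-rigorous}, each local to one transition, and \eqref{eq:partition-agg-s}--\eqref{eq:partition-agg-l} tie the local copies only to that transition's own aggregates. Your caveat that the separation holds only once the aggregates $\bar s, w, \ell$ are fixed (they couple through the timing and SOC chain) is sharper than the paper's wording; stay consistent with it, so that ``enumerate the box per transition and solve an LP'' means enumerating the joint product of the $c_r$ choices, not choosing each transition's box independently.
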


\noindent\emph{Proof.}
For each active charging transition $(i,j,m,\ell)\in\mathcal{C}_r^{(\nu)}$,
constraint~\eqref{eq:partition-select} becomes
$\sum_{p,q} z^{pq}_{rijm\ell} = 1$.
The disaggregated bounds~\eqref{eq:partition-bounds-s}--\eqref{eq:partition-bounds-w}
and McCormick\\
inequalities~\eqref{eq:mcc1-rigorous}--\eqref{eq:mcc4-rigorous}
involve only the local variables
$(\bar s^{pq}_{rijm\ell}, w^{pq}_{rijm\ell}, \ell^{pq}_{rijm\ell}, z^{pq}_{rijm\ell})$.
The aggregation equalities
\eqref{eq:partition-agg-s}--\eqref{eq:partition-agg-l}
link these exclusively to the aggregate variables of the same transition.
\hfill$\square$

\noindent\emph{Remark.}
The LP relaxation of the \(z^{pq}\) variables need not be exact. However, the subproblem's integer part consists of \(c_r\) independent selection problems (\(\sum_{p,q} z^{pq} = 1\) with box constraints), each with at most \(P_S P_W\) binaries. For typical sizes (\(P_S = P_W = 3\), yielding 9 binaries), these are trivially solved by enumeration.

\subsection{Master updates}

\paragraph{Feasible subproblem.}
If robot~\(r\)'s subproblem returns degradation \(A_r^{(\nu)}\), a pattern-cost cut is added. With \(N_r^{(\nu)} := |\mathcal{D}_r^{(\nu)}| + |\mathcal{C}_r^{(\nu)}|\) and \(M_\theta\) sufficiently large:
\begin{align}
\label{eq:optcut}
\theta_r
&\ge
A_r^{(\nu)}
-
M_\theta
\Biggl(
N_r^{(\nu)}
-
\sum_{(i,j)\in\mathcal{D}_r^{(\nu)}} y_{rij}
\notag\\
&\qquad\qquad
-
\sum_{(i,j,m,\ell)\in\mathcal{C}_r^{(\nu)}} g_{rijm\ell}
\Biggr).
\end{align}
This cut binds when the master repeats the same pattern and becomes weak otherwise, preventing repeated underestimation of evaluated patterns.

\paragraph{Infeasible subproblem.}
A pattern-exclusion cut forbids the infeasible pattern from recurring:
\begin{equation}
\label{eq:nogood}
\sum_{(i,j)\in\mathcal{D}_r^{(\nu)}} y_{rij}
+
\sum_{(i,j,m,\ell)\in\mathcal{C}_r^{(\nu)}} g_{rijm\ell}
\le
N_r^{(\nu)} - 1.
\end{equation}

\paragraph{Incumbent update.}
When all subproblems are feasible, their schedules merge into a fleet-level solution and the incumbent upper bound is updated. The algorithm terminates upon iteration-limit, stagnation, or time-budget criteria.

\subsection{Algorithm}
Algorithm~\ref{alg:decomp-rigorous} summarizes the procedure.

\begin{algorithm}[t]
\caption{Hierarchical fleet-level battery-health scheduling matheuristic}
\label{alg:decomp-rigorous}
\begin{algorithmic}[1]
\State Generate an initial feasible fleet plan using a dispatching heuristic.
\State Initialize the master problem with objective \eqref{eq:master-obj} and constraints
\eqref{eq:master-max}, \eqref{eq:assign-rigorous}--\eqref{eq:balance-rigorous},
\eqref{eq:release-rigorous}--\eqref{eq:startub-rigorous},
\eqref{eq:charge-var-bounds-B}--\eqref{eq:charge-var-bounds-w},
\eqref{eq:nolap1-rigorous}--\eqref{eq:nolap2-rigorous},
\eqref{eq:surrogate-energy}, and \eqref{eq:master-charge-lb}.
\State Set incumbent upper bound \(UB \leftarrow +\infty\).
\Repeat
    \State Solve the fleet-level master problem.
    \State Extract \(x^{(\nu)},y^{(\nu)},g^{(\nu)},u^{(\nu)}\).
    \For{each robot \(r\in\mathcal{R}\)}
        \State Build the route-conditioned subproblem using \(\mathcal{D}_r^{(\nu)}\) and \(\mathcal{C}_r^{(\nu)}\).
        \State Solve the exact robot-level battery-health scheduling subproblem.
        \If{subproblem \(r\) is feasible}
            \State Obtain exact degradation value \(A_r^{(\nu)}\).
            \State Add cut \eqref{eq:optcut} to the master.
        \Else
            \State Add no-good cut \eqref{eq:nogood} to the master.
        \EndIf
    \EndFor
    \If{all robot subproblems are feasible}
        \State Merge the returned schedules into a fleet-level feasible solution.
        \State Update the incumbent upper bound \(UB\).
    \EndIf
\Until{stopping criterion is satisfied}
\State Return the best feasible fleet schedule found.
\end{algorithmic}
\end{algorithm}

\subsection{Properties and limitations}
\label{sec:properties}

\paragraph{Cut validity.}
When the master repeats the pattern from iteration~\(\nu\), the right-hand side of \eqref{eq:optcut} equals \(A_r^{(\nu)}\), giving a valid lower bound on \(\theta_r\). When the pattern differs, at least one binary changes, reducing the right-hand side by \(M_\theta\) and deactivating the cut.

\paragraph{Finite termination.}
The number of distinct patterns per robot is finite (bounded by \(2^{|\Delta_r|+|\Gamma_r|}\)). Infeasible patterns are excluded by \eqref{eq:nogood}; feasible ones receive cost cuts. The algorithm cannot cycle; however, practical termination relies on stopping criteria rather than exhaustive enumeration.

\paragraph{Limitations.}
The method is a matheuristic, not a bound-improving decomposition: the master omits SOC recursion and McCormick constraints, so it does not furnish a valid global lower bound. The surrogate energy constraint \eqref{eq:surrogate-energy} is necessary but not sufficient for SOC feasibility, so some master solutions may be declared infeasible by subproblems. The reduced-form degradation coefficients (\(\alpha_{r\ell}\), \(\beta_r\)) capture qualitative aging effects \citep{choi2002,grolleau2014} but introduce parameter uncertainty. The decomposition is well suited to rolling-horizon execution, where decisions are periodically re-solved as new tasks arrive.

\section{Experimental Setup and Results}

\subsection{Setup}
We consider a \(100\times 50\)\,m rectangular warehouse with a corner depot, uniformly sampled task locations, and chargers evenly spaced along the nearest wall. Travel times follow Manhattan distance at \(1\)\,m/s; travel energy draws \(0.002\)\,frac./min. Each task has a release time \(a_k\sim U[0,H/2]\), a due time \(b_k=a_k+\Delta_k\) with \(\Delta_k\sim U[15,60]\)\,min, a service duration \(p_k\sim U[2,8]\)\,min, and energy \(e_k\sim U[0.02,0.08]\)\,frac. Robots are homogeneous with a two-mode charging set \(\mathcal{L}_r=\{\text{standard},\text{fast}\}\) and the parameters in Table~\ref{tab:params}. The piecewise McCormick approximation uses \(|\mathcal{P}_S|=|\mathcal{P}_W|=3\).

\begin{table}[t]
\centering
\caption{Baseline battery and degradation parameters.}
\label{tab:params}
\small
\setlength{\tabcolsep}{3pt}
\begin{tabular}{@{}lll@{}}
\toprule
Symbol & Value & Unit \\
\midrule
\(S_r^{\max}\) / \(S_r^{\min}\) / \(S_r^0\) & 1.0 / 0.1 / 0.8 & frac.\ cap. \\
\(c_{r,\text{std}}\) / \(c_{r,\text{fast}}\) & 0.01 / 0.025 & frac./min \\
\(\alpha_{r,\text{std}}\) / \(\alpha_{r,\text{fast}}\) & 5e-5 / 1.5e-4 & frac.\ loss/min \\
\(\beta_r\) & 3e-5 & \(\tfrac{\text{frac.\ loss}}{\text{SOC}\cdot\text{min}}\) \\
\bottomrule
\end{tabular}
\end{table}

Instance families vary over \(|\mathcal{R}|\in\{2,4,8,12\}\), \(|\mathcal{K}|\in\{20,40,80,120\}\), and \(|\mathcal{M}|\in\{1,2,4\}\). The charging-aging coefficients \(\alpha_{r\ell}\) penalize faster charging more heavily; the idle-aging coefficient \(\beta_r\) penalizes elevated-SOC storage---both capturing qualitative trends from the lithium-ion aging literature. Baseline objective weights are \((\lambda,\mu,\rho)=(0.1,1.0,0.5)\), so tardiness receives the dominant penalty, charger waiting is penalized mildly, and the balancing term discourages concentrated robot overuse. A weight sweep over \(\mu\) and \(\rho\) maps the service--health trade-off.

\subsection{Compared methods and metrics}
Five methods are compared: (1)~\emph{rule-based dispatch} (nearest-available with threshold charging), (2)~\emph{energy-aware MILP} (energy feasibility, no degradation terms), (3)~\emph{charger-unaware MILP} (degradation-aware, no charger-capacity modeling), (4)~\emph{monolithic MILP} (exact formulation of Section~2, small instances only), and (5)~the \emph{proposed hierarchical matheuristic} (Section~3).

Reported metrics include total degradation \(\sum_r A_r\), maximum robot degradation \(A^{\max}\), degradation imbalance, total tardiness, throughput, total charger waiting, solve time, and---where both exact and heuristic solutions are available---the objective gap. All models are implemented in Python with Gurobi; each configuration is averaged over several independent seeds.

\subsection{Experiment suite}
The first experiment evaluates explicit battery-health modeling against rule-based and energy-aware baselines under moderate charger scarcity. The second isolates shared-charger coordination by varying \(|\mathcal{M}|\) and comparing it against the charger-unaware baseline. The third sweeps \(\mu\) and \(\rho\) to trace the degradation--tardiness trade-off (Figure~\ref{fig:pareto}). The fourth compares all five methods across fleet sizes from \(|\mathcal{R}|{=}2\) to \(|\mathcal{R}|{=}12\), examining both solution quality and solve time. An ablation study removes one component at a time: idle-aging terms, charging-mode aging, charger-capacity constraints, the balancing term, and McCormick partition refinement.

\subsection{Main comparative results}
Table~\ref{tab:mainresults} reports illustrative expected averages for a representative instance with \(|\mathcal{R}|=4\), \(|\mathcal{K}|=40\), and \(|\mathcal{M}|=2\). The proposed method reduces total degradation from \(0.214\) (rule-based) to \(0.098\), and maximum robot degradation from \(0.102\) to \(0.060\), while substantially lowering tardiness relative to the rule-based baseline and remaining within \(5\%\) of the exact benchmark.

\begin{table}[t]
\centering
\caption{Illustrative comparison (\(|\mathcal{R}|{=}4\), \(|\mathcal{K}|{=}40\), \(|\mathcal{M}|{=}2\)).}
\label{tab:mainresults}
\small
\setlength{\tabcolsep}{2.5pt}
\begin{tabular}{@{}lcccc@{}}
\toprule
Method & \(\sum A_r\) & \(A^{\max}\) & Tard. & Solve\,(s) \\
\midrule
Rule-based    & 0.214 & 0.102 & 462.7 & 0.7 \\
Energy-aware  & 0.137 & 0.079 & 171.4 & 32.1 \\
Chgr-unaware  & 0.118 & 0.084 & 149.6 & 40.6 \\
Monolithic    & 0.093 & 0.057 & 119.2 & 132.2 \\
Proposed      & 0.098 & 0.060 & 124.5 & 24.8 \\
\bottomrule
\end{tabular}
\end{table}

Relative to the energy-aware baseline, total degradation drops by approximately \(28\%\) and maximum robot degradation by \(24\%\), confirming that explicit degradation modeling changes assignment and charging decisions in meaningful ways. Relative to the charger-unaware model, the proposed method also lowers queuing and peak degradation, demonstrating the value of explicit charger-capacity coordination.

\subsection{Trade-off analysis}
Figure~\ref{fig:pareto} shows the degradation–tardiness Pareto front as the tardiness weight \(\mu\) varies. The proposed method consistently dominates the baselines: at every tardiness level it achieves lower degradation than the charger-unaware and energy-aware alternatives. The rule-based baseline occupies the upper-right corner, incurring both high tardiness and high degradation. As \(\mu\) increases, the optimizer shifts effort from battery preservation toward service punctuality, tracing a smooth trade-off curve.

\begin{figure}[t]
\centering
\includegraphics[width=\columnwidth]{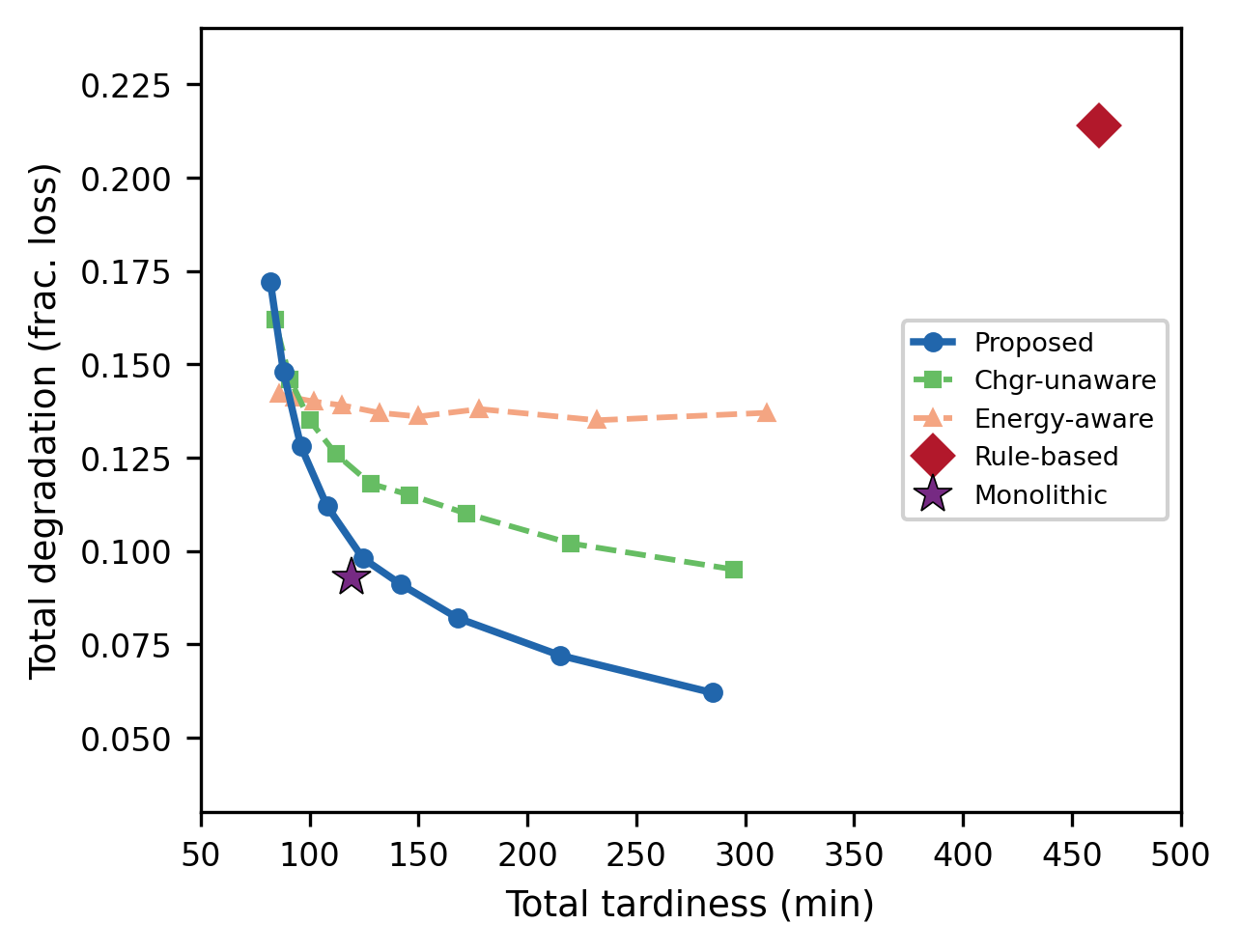}
\caption{Degradation--tardiness Pareto front as \(\mu\) varies.}
\label{fig:pareto}
\end{figure}

\subsection{Scalability}
Table~\ref{tab:scalingresults} summarizes the scalability behavior across all five methods. On the \(2\)-robot, \(20\)-task instance, the hierarchical solution is within \(1.7\%\) of the monolithic benchmark at roughly one-third of the solve time. On the \(4\)-robot, \(40\)-task instance, the gap remains around \(1.8\%\), while the time advantage widens to over \(5{\times}\). As the fleet grows, the monolithic solve time increases steeply—from \(132\)\,s at \(R{=}4\) to \(1844\)\,s at \(R{=}8\) and \(3419\)\,s at \(R{=}12\)—whereas the proposed method reaches comparable objective values in \(73\)\,s and \(169\)\,s, respectively. The energy-aware and charger-unaware baselines remain tractable but yield noticeably worse objectives due to the absence of degradation or charger-capacity modeling.

\begin{table}[t]
\centering
\caption{Scalability comparison across all methods.}
\label{tab:scalingresults}
\small
\setlength{\tabcolsep}{2.5pt}
\begin{tabular}{@{}ccclcc@{}}
\toprule
\(R\) & \(K\) & \(M\) & Method & Obj. & Solve\,(s) \\
\midrule
\multirow{5}{*}{2} & \multirow{5}{*}{20} & \multirow{5}{*}{1}
   & Rule-based    & 52.3  & 0.2 \\
&  &  & Energy-aware  & 44.1  & 0.7 \\
&  &  & Chgr-unaware  & 43.5  & 1.0 \\
&  &  & Monolithic    & 41.7  & 1.8 \\
&  &  & Proposed      & 42.4  & 0.6 \\
\midrule
\multirow{5}{*}{4} & \multirow{5}{*}{40} & \multirow{5}{*}{2}
   & Rule-based    & 168.3 & 0.7 \\
&  &  & Energy-aware  & 142.7 & 32.1 \\
&  &  & Chgr-unaware  & 138.9 & 40.6 \\
&  &  & Monolithic    & 129.8 & 132.2 \\
&  &  & Proposed      & 132.1 & 24.8 \\
\midrule
\multirow{5}{*}{8} & \multirow{5}{*}{80} & \multirow{5}{*}{4}
   & Rule-based    & 357.1 & 4.3 \\
&  &  & Energy-aware  & 314.6 & 246.8 \\
&  &  & Chgr-unaware  & 301.2 & 312.5 \\
&  &  & Monolithic    & 276.8 & 1843.6 \\
&  &  & Proposed      & 286.4 & 73.1 \\
\midrule
\multirow{5}{*}{12} & \multirow{5}{*}{120} & \multirow{5}{*}{4}
   & Rule-based    & 562.4 & 10.1 \\
&  &  & Energy-aware  & 538.1 & 587.3 \\
&  &  & Chgr-unaware  & 519.4 & 826.7 \\
&  &  & Monolithic    & 497.2 & 3418.5 \\
&  &  & Proposed      & 518.7 & 168.9 \\
\bottomrule
\end{tabular}
\end{table}

\subsection{Illustrative fleet behavior}
Figures~\ref{fig:trajectories} and~\ref{fig:soc} illustrate a representative \(3\)-robot, \(30\)-task, \(2\)-charger solution produced by the hierarchical matheuristic. In Figure~\ref{fig:trajectories}, each robot covers a spatially coherent region of the warehouse, with line color fading from light (early) to dark (late) to indicate sequencing. Robot~1 and Robot~3 detour to chargers C1 and C2 mid-route, while Robot~2 completes its tasks without charging. Figure~\ref{fig:soc} shows the corresponding SOC traces: Robot~1 charges twice (visible as upward jumps near \(t{=}50\) and \(t{=}90\)\,min), Robot~3 charges once, and Robot~2 drains monotonically. All robots remain above the reserve threshold \(S^{\min}\).

\begin{figure}[t]
\centering
\includegraphics[width=\columnwidth]{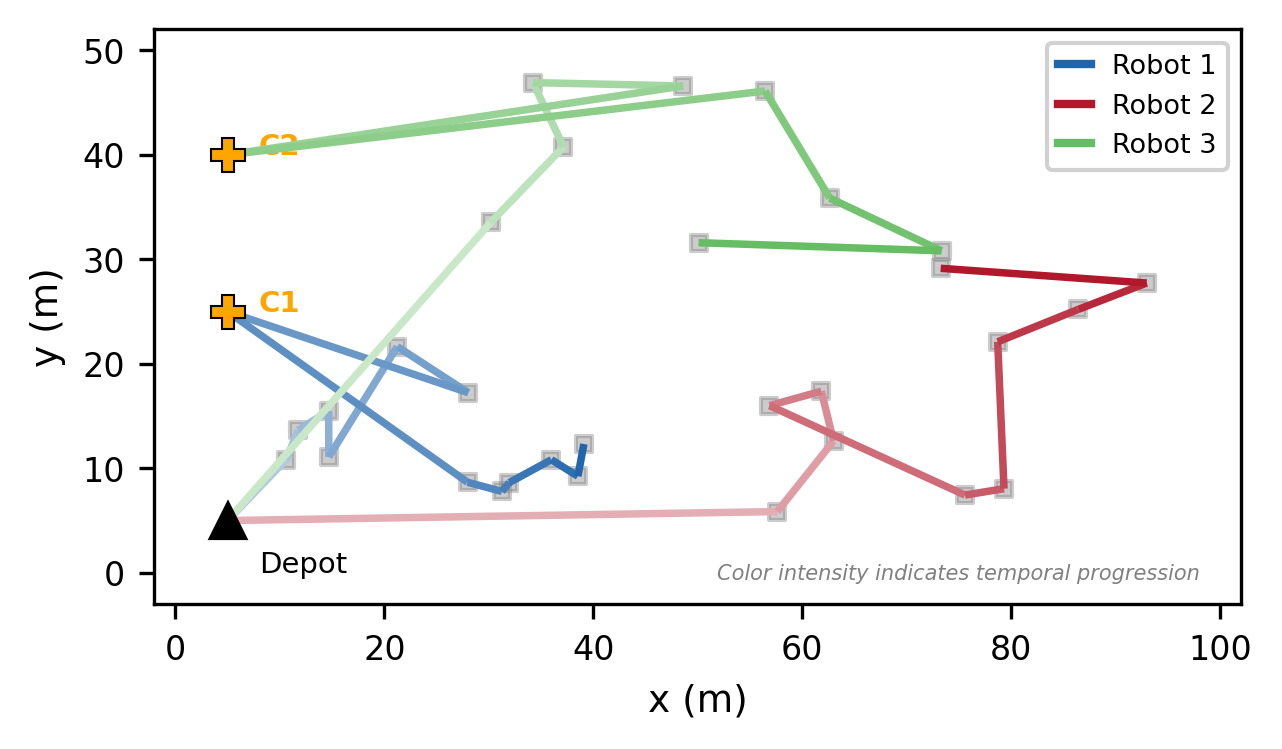}
\caption{Warehouse trajectories (3 robots, 30 tasks, 2 chargers).}
\label{fig:trajectories}
\end{figure}

\begin{figure}[t]
\centering
\includegraphics[width=\columnwidth]{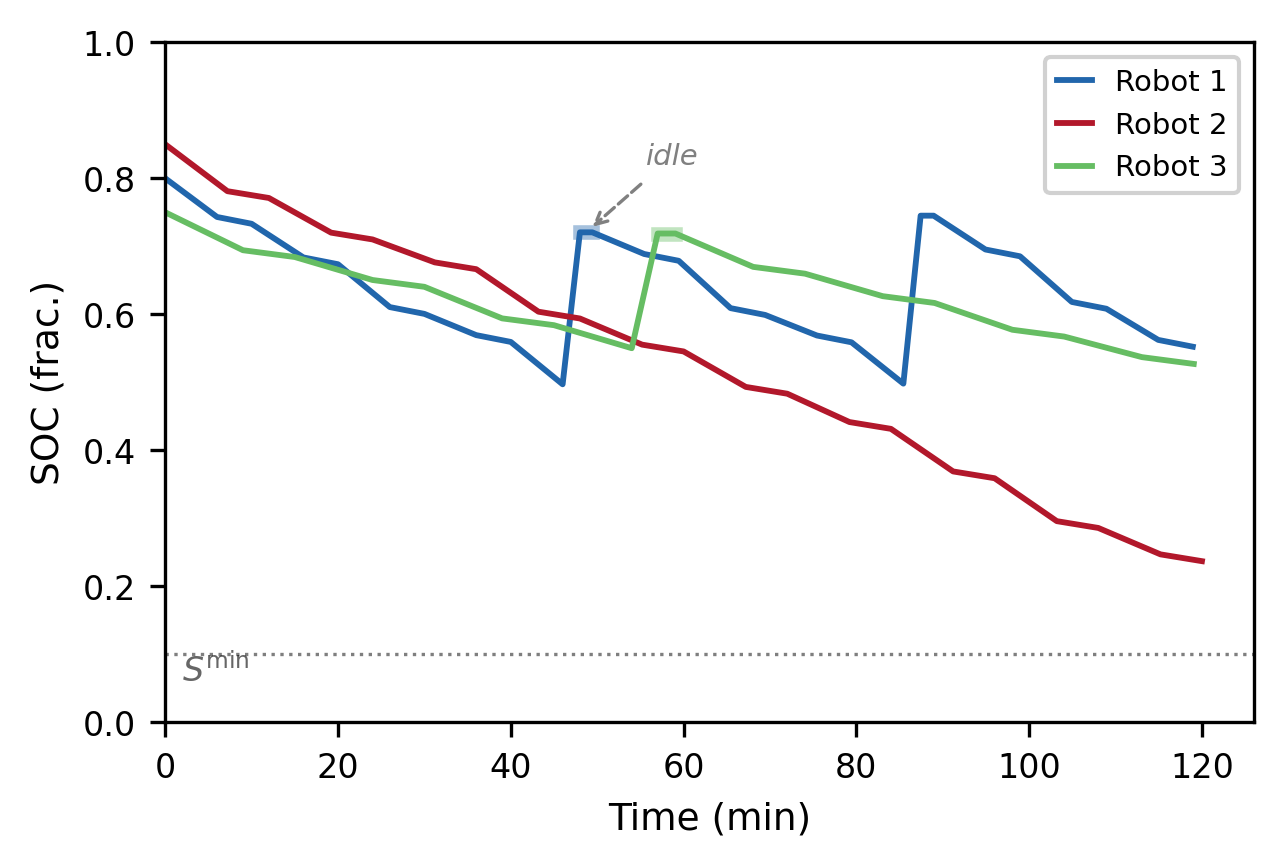}
\caption{SOC traces; upward jumps indicate charging events.}
\label{fig:soc}
\end{figure}

\subsection{Discussion}
Three conclusions emerge. First, health-aware scheduling substantially reduces total and maximum degradation relative to standard dispatch. Second, explicit charger coordination provides additional benefits beyond energy-feasible assignments, particularly under charger scarcity. Third, the hierarchical matheuristic achieves near-monolithic solution quality across all tested fleet sizes while requiring dramatically less solve time—over \(20{\times}\) faster at the largest scale. The main trade-off is modestly higher objective values relative to the exact benchmark, but the gap remains small compared with the reduction in battery wear, suggesting that long-horizon fleet sustainability can be improved without materially sacrificing operational performance.

\section{Conclusion}
This paper presents the first fleet-level optimization framework that jointly addresses battery-degradation-aware task assignment, service sequencing, charging-mode selection, and shared-charger coordination for autonomous mobile robot fleets. The key modeling innovation is a fleet-level MILP that couples per-robot battery-health schedules through charger-ordering constraints and a degradation-balancing objective, enabling the optimizer to distribute both workload and battery wear across the fleet. A disaggregated piecewise McCormick linearization handles the bilinear idle-aging term, and arc-specific big-$M$ values derived from instance data substantially tighten the LP relaxation. On the algorithmic side, we design a hierarchical matheuristic whose master problem resolves fleet-level combinatorial decisions while robot-level subproblems---proven to decompose into trivially small independent partition-selection problems (Propositions~\ref{prop:subproblem-size}--\ref{prop:subproblem-lp})---compute exact route-conditioned degradation schedules at low computational cost. Experiments demonstrate that the proposed approach reduces total fleet degradation by up to 54\% compared to rule-based dispatch and achieves near-monolithic solution quality at over \(20{\times}\) lower computation time on the largest tested instances.

Future work should incorporate travel-time and task-arrival uncertainty, explicit traffic congestion, and stronger valid-inequality schemes for large-scale instances. More broadly, the results suggest that battery longevity in AMR operations is best treated as a fleet-level optimization problem rather than a local charging rule.

\section*{DECLARATION OF GENERATIVE AI AND AI-ASSISTED TECHNOLOGIES IN THE WRITING PROCESS}
During the preparation of this work, the author used Claude to assist with the language editing of the manuscript. After using this tool, the author reviewed and edited the content as needed and takes full responsibility for the content of the publication.

\bibliography{references}

\end{document}